\pgfplotsset{compat=1.18}
\DeclareMathOperator*{\argmin}{arg\,min}
\DeclareMathOperator*{\Lap}{Lap}
\definecolor{dkgreen}{rgb}{0,0.6,0}
\definecolor{gray}{rgb}{0.5,0.5,0.5}
\definecolor{mauve}{rgb}{0.58,0,0.82}
\definecolor{darkblue}{rgb}{0.0,0.0,0.6}
\definecolor{cyan}{rgb}{0.0,0.6,0.6}
\definecolor{mBlue}{HTML}{4285f4}
\definecolor{mRed}{HTML}{ea4335}
\definecolor{mGreen}{HTML}{34a853}
\definecolor{mYellow}{HTML}{fbbc04}
\definecolor{mLightBlue}{HTML}{6d9eeb}
\definecolor{mLightRed}{HTML}{e06666}
\definecolor{mLightGreen}{HTML}{93c47d}
\definecolor{mLightYellow}{HTML}{ffd966}
\definecolor{archtBlue}{HTML}{9fc5e8}
\definecolor{archtTeal}{HTML}{a4d2d7}
\definecolor{archtYellow}{HTML}{ffe599}
\definecolor{archtOrange}{HTML}{f9cb9c}
\definecolor{archtCetus}{HTML}{ea9999}
\definecolor{archtOther}{HTML}{dd7e6b}
\definecolor{archtPurple}{HTML}{b4a7d6}
\definecolor{archtGray}{HTML}{eeeeee}
\definecolor{archtGreen}{HTML}{b6d7a8}
\definecolor{archtRed}{HTML}{ea9999}
\newcommand{\convdef}[1]{\textsc{Stencil}}
\newcommand{\econvdef}[1]{\mbox{\textsc{$d_{\chi}$-Stencil}}}
\newcommand{\convdefplus}[1]{\mbox{\textsc{Stencil$^{+}$}}}
\newcommand{\custext}[1]{\mbox{\textsc{CusText$^{+}$}}}
\newcommand{\pconvdef}[1]{\mbox{\textsc{Stencil$_p$}}}
\newcommand{\convdefshort}[1]{\textsc{Sten}}
\newcommand{\convdefshortplus}[1]{\textsc{Sten$^{+}$}}
\newcommand{\pconvdefshort}[1]{\textsc{Sten$_p$}}
\newcommand{\pconvdefshortplus}[1]{\textsc{Sten$_p^{+}$}}
\newcommand{\noise}[1]{\textsc{Noise}}
\newcommand{\dchi}[1]{$d_{\chi}$}
\newcommand{\noiseplus}[1]{\textsc{Noise$^{+}$}}
\newcommand{\qwen}[1]{\textsc{Qwen2.5}}
\newcommand{\flan}[1]{\textsc{Flan-T5}}
\DeclareMathOperator{\dist}{dist}
\newtheorem{lemma}{Lemma}
\newtheorem{remark}{Remark}
\newtheorem{theorem}{Theorem}
\title{Token-Level Privacy in Large Language Models}
\author{
 \textbf{Re'em Harel\textsuperscript{1,2}},
 \textbf{Niv Gilboa\textsuperscript{1}},
 \textbf{Yuval Pinter\textsuperscript{1}}
\\
 \textsuperscript{1}Department of Computer Science, Ben-Gurion University of the Negev, Israel \\
\textsuperscript{2}Department of Physics, Nuclear Research Center – Negev, Israel
\\
{\tt\small reemha@bgu.ac.il, gilboan@bgu.ac.il, uvp@cs.bgu.ac.il}
 }
\begin{document}
\maketitle
\begin{abstract}
The use of language models as remote services requires transmitting private information to external providers, raising significant privacy concerns. 
This process not only risks exposing sensitive data to untrusted service providers but also leaves it vulnerable to interception by eavesdroppers.
Existing privacy-preserving methods for natural language processing (NLP) interactions primarily rely on semantic similarity, overlooking the role of contextual information.
In this work, we introduce \econvdef{}, a novel token-level privacy-preserving mechanism that integrates contextual and semantic information while ensuring strong privacy guarantees under the \dchi{} differential privacy framework, achieving $2\epsilon$-$d_\chi$-privacy.
By incorporating both semantic and contextual nuances, \econvdef{} achieves a robust balance between privacy and utility.
We evaluate \econvdef{} using state-of-the-art language models and diverse datasets, achieving comparable and even better trade-off between utility and privacy compared to existing methods. 
This work highlights the potential of \econvdef{} to set a new standard for privacy-preserving NLP in modern, high-risk applications.
\end{abstract}

\section{Introduction}
\label{sec:intro}
Natural language processing (NLP) models as a service, such as ChatGPT~\cite{chatgpt2,chatgpt1}, present notable privacy challenges.
These models often require users to send their data to external servers, which can result in potential exposure, misuse, or insufficient protection of personal and sensitive information~\cite{surveyppNLP}.
For example, information can be leaked by model inversion~\cite{li2017reverse,devlin-etal-2019-bert}, exploitation of feature memorization within the large language model (LLM)~\cite{carlini2021extracting}, and more.
This issue affects not only users but also service providers, who are required to adhere to regulations such as the General Data Protection Regulation (GDPR)~\cite{voigt2017eu} and the California Consumer Privacy Act (CCPA)~\cite{barrett2019eu}.
Moreover, the dependence on third-party services further complicates compliance, as users must rely on these providers to ensure that they follow these privacy standards and manage data responsibly.

One common approach to addressing privacy concerns in NLP is the application of differential privacy mechanisms~\cite{dwork2006differential}, which offer formal guarantees by limiting the amount of information that can be inferred about any individual user.
These mechanisms can be employed during either the training or inference phases of an LLM, typically by perturbing the input data with carefully calibrated noise.
This process effectively conceals the original input, ensuring that private details cannot be reconstructed from the model's outputs.
Differential privacy can operate in both remote and local settings.
However, given the inherent trust issues with remote servers, a more practical and secure alternative is to apply differential privacy locally~\cite[LDP;][]{arachchige2019local}.
In this local setting, users preprocess their data using differential privacy techniques before sharing sanitized results with a server.
This approach ensures that raw data remains protected from potential breaches, significantly reducing privacy risks while still enabling useful contributions to the model's functionality.

\begin{figure}[t]
    \centering
    \scriptsize
    \begin{tikzpicture}[auto, scale=0.85]
    
        \tikzstyle{embs} = [shape = rectangle, rounded corners]
        
        \tikzset{>=stealth'}
        
       \node[text width=6cm] at (0.15, 0.95) {Input tokens}; 

        \draw [thin,fill=orange!40, rounded corners] (-0.85,0.75) rectangle (0.25,1.25);
        \node (o3) at (-0.3,1.0) {$t_{i-2}$};

        \draw [thin,fill=orange!40, rounded corners] (0.4,0.75) rectangle (1.5,1.25);
        \node (o4) at (0.95,1.0) {$t_{i-1}$};

        \draw [line width=0.3mm, fill=orange!40, rounded corners] (1.65,0.75) rectangle (2.75,1.25);
        \node (o5) at (2.2,1.0) {\textbf{$t_{i}$}};

        \draw [thin,fill=orange!40, rounded corners] (2.9,0.75) rectangle (4,1.25);
        \node (o6) at (3.45,1.0) {$t_{i+1}$};

        \draw [thin,fill=orange!40, rounded corners] (4.15,0.75) rectangle (5.25,1.25);
        \node (o7) at (4.7,1.0) {$t_{i+2}$};

       \draw [fill=red!25, thin, double, rounded corners] (-0.9,-0.2) rectangle (5.3,0.3) ;
        \node (embedding) at (2.2, 0.05) {Embedding table};
        
        \draw[->] (o3) -> (-0.3,0.3) ;
        \draw[->] (o4) -> (0.95,0.3);
        \draw[->] (o5) -> (2.2,0.3);
        \draw[->] (o6) -> (3.45,0.3);
        \draw[->] (o7) -> (4.7,0.3);

       \node[text width=6cm] at (0.15, -0.85) {Embedding vectors}; 

     \draw [thin, rounded corners, fill=yellow!25] (-0.85,-0.65) rectangle (0.3,-1.05);
              \node [matrix] (e3) at (-0.255,-0.85) {
              \draw [fill=orange!155] circle (0.75mm); 
              & \draw [fill=orange!75] circle (0.75mm); 
              & \draw [fill=orange!55] circle (0.75mm); 
              & \draw [fill=orange!95] circle (0.75mm);  \\
              };
        \draw [thin, rounded corners, fill=yellow!25] (0.4,-0.65) rectangle (1.5,-1.05);
              \node [matrix] (e4) at (0.95,-0.85) {
              \draw [fill=orange!55] circle (0.75mm);
              & \draw [fill=orange!55] circle (0.75mm); 
              & \draw [fill=orange!125] circle (0.75mm);
              & \draw [fill=orange!175] circle (0.75mm);  \\
              };
        \draw [rounded corners, fill=yellow!25] (1.65,-0.65) rectangle (2.75,-1.05);
              \node [matrix] (e5) at (2.2,-0.85) {
              \draw [fill=orange!45] circle (0.75mm); 
              & \draw [fill=orange!75] circle (0.75mm); 
              & \draw [fill=orange!75] circle (0.75mm); 
              & \draw [fill=orange!105] circle (0.75mm);  \\
              };
        \draw [thin, rounded corners, fill=yellow!25] (2.9,-0.65) rectangle (4,-1.05);
              \node [matrix] (e6) at (3.45,-0.85) {
              \draw [fill=orange!85] circle (0.75mm); 
              & \draw [fill=orange!70] circle (0.75mm); 
              & \draw [fill=orange!100] circle (0.75mm); 
              & \draw [fill=orange!120] circle (0.75mm);  \\
              };
        \draw [thin, rounded corners, fill=yellow!25] (4.15,-0.65) rectangle (5.25,-1.05);
              \node [matrix] (e7) at (4.7,-0.85) {
              \draw [fill=orange!130] circle (0.75mm); 
              & \draw [fill=orange!40] circle (0.75mm); 
              & \draw [fill=orange!80] circle (0.75mm); 
              & \draw [fill=orange!90] circle (0.75mm);  \\
              };

         \draw[->] (-0.255, -0.2) -> (e3);
         \draw[->] (0.95, -0.2) -> (e4);
         \draw[->] (2.2, -0.2) -> (e5);
         \draw[->] (3.45, -0.2) -> (e6);
         \draw[->] (4.7, -0.2) -> (e7);

        \node[thin,sloped, circle, draw, inner sep=1pt] (c1) at(2.2,-1.5) {{\large +}$_w$};
        \draw[-] (e3) |- (c1);
        \draw[-] (e4) |- (c1);
        \draw[-] (e5) -- (c1);
        \draw[-] (e6) |- (c1);
        \draw[-] (e7) |- (c1);
        
        \node[text width=6cm] at (0.15, -2.25) {Context vector and noise vector}; 
         \draw [thin, rounded corners, fill=yellow!25] (1.65,-2.05) rectangle (2.75,-2.45);
              \node [matrix] (qv1) at (2.2,-2.25) {
              \draw [fill=orange!70] circle (0.75mm); 
              & \draw [fill=orange!90] circle (0.75mm); 
              & \draw [fill=orange!60] circle (0.75mm); 
              & \draw [fill=orange!125] circle (0.75mm);  \\
              };
        \draw [thin, rounded corners, fill=yellow!25] (3.2,-2.05) rectangle (4.3,-2.45);
              \node [matrix] (nv1) at (3.75,-2.25) {
              \draw [fill=gray!90] circle (0.75mm); 
              & \draw [fill=gray!56] circle (0.75mm); 
              & \draw [fill=gray!70] circle (0.75mm); 
              & \draw [fill=gray!45] circle (0.75mm);  \\
              };     

        \draw[->] (c1) -> (qv1);
        
     \draw[->,decorate,decoration={snake},segment length=1.5mm] (4.75,-2.25) ->(4.3,-2.25);

    \begin{scope}[shift={(4.5,-2.7)}]
        \begin{axis}[
            width=2.5cm,
            height=2.5cm,
            clip=false,
            axis equal image,
            hide axis,
        ]
            \addplot[domain=0:360,samples=100,black, thin] ({cos(x)},{sin(x)});
            
            \foreach \i in {1,...,100}{
                \pgfmathsetmacro{\r}{sqrt(0.5)*rand}
                \pgfmathsetmacro{\theta}{360*rand}
                \pgfmathsetmacro{\x}{\r*cos(\theta)}
                \pgfmathsetmacro{\y}{\r*sin(\theta)}
                
                \addplot[mark=*, only marks, mark size=0.2pt, color=gray] coordinates {(\x,\y)};
            }
        \end{axis}
    \end{scope}

        \draw [line width=0.3mm, rounded corners, fill=yellow!25] (1.65,-3.45) rectangle (2.75,-3.85);
              \node [matrix] (fv1) at (2.2,-3.65) {
              \draw [fill=orange!80] circle (0.75mm); 
              & \draw [fill=orange!75] circle (0.75mm); 
              & \draw [fill=orange!65] circle (0.75mm); 
              & \draw [fill=orange!85] circle (0.75mm);  \\
              };

    \node[thin,sloped, circle, draw, inner sep=1pt] (c2) at(2.2,-2.9) {\large +};
    \draw[-] (nv1) |- (c2);
    \draw[-] (qv1) -- (c2);
    \draw[->] (c2) -- (fv1);

    \draw [fill=red!50, thin, double, rounded corners] (-0.9,-4.2) rectangle (5.3,-4.7) ;
    \node (embedding) at (2.2, -4.45) {Un-Embedding};
    \draw[->] (fv1) -> (2.2,-4.2);

         \draw [line width=0.3mm, fill=orange!60, rounded corners] (1.65,-5.1) rectangle (2.75,-5.5);
        \node (t5) at (2.2,-5.3) {\textbf{$\tilde{t}_{i}$}};
    
    \draw[->] (2.2, -4.7) -> (t5);
        \node[text width=6cm] at (0.15, -5.3) {Perturbed token}; 

    \end{tikzpicture}
    
    \caption{Schematic overview of our proposed method, \econvdef{}, which integrates contextual and semantic information to enhance privacy preservation while maintaining the utility of the NLP models.}
    \label{fig:pipeline}
\end{figure}
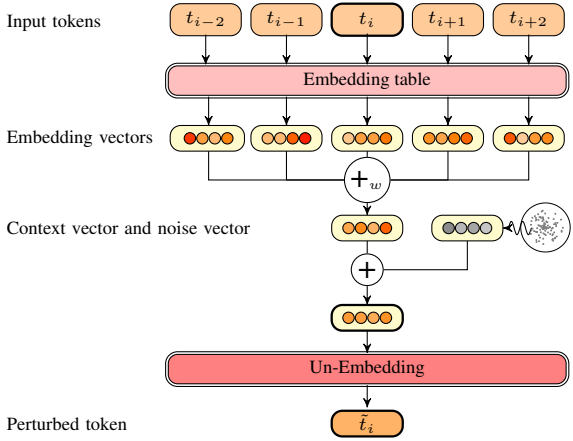

In this work, we introduce \econvdef{}, a novel differential privacy-based technique for token-level privacy preservation.
The core idea of \econvdef{} is to integrate contextual and semantic information while ensuring strong privacy guarantees that comply with LDP, as seen in \autoref{fig:pipeline}.
Specifically, for each token in a sentence, we encode the embedding vectors of its nearest neighbors to capture contextual relationships.
To further enhance privacy, we introduce a noise vector sampled from a random distribution, which perturbs the encoded vector and makes it harder to reconstruct.
We show that for a noise vector sampled from a Laplacian distribution, and some privacy parameter $\epsilon$, the mechanism is $2\epsilon$-$d_\chi$-private.
Finally, we locate the closest token (the process of un-embedding an embedding vector) to the perturbed vector, thereby preserving semantic integrity.
By embedding information from neighboring tokens directly into each token, \econvdef{} effectively maintains contextual information, an aspect critical to the success of LLMs but often overlooked by existing privacy-preserving techniques that primarily focus on semantic similarity.
This approach offers the potential for improved utility compared to such techniques, striking a more effective balance between privacy and utility.

To evaluate the efficacy of the proposed privacy-preserving mechanism, we conduct a comprehensive set of experiments on novel benchmarks designed to assess the full capabilities of LLMs, for instance, the SWAG~\cite{zellers2018swag} benchmark.
We compare the \econvdef{} mechanism against existing privacy-preserving techniques, including the \custext{}~\cite{chen-etal-2023-customized} text sanitization method and the $d_{\chi}$-differential privacy approach proposed by \citet{feyisetan2020privacy}. 
The experimental results show that the \econvdef{} mechanism achieves comparable and even better utility-privacy tradeoffs compared to these alternative methods, demonstrating the potential of incorporating contextual information in addition to semantic information in privacy preserving techniques.

\section{Preserving Privacy in Models}
\label{sec:related}

There have been many attempts to preserve privacy which primarily focus on anonymization techniques~\cite{liu2017identification,friedrich-etal-2019-adversarial} and methods that introduce noise into embedding vectors~\cite{zhou-etal-2023-textobfuscator,zhou-etal-2022-textfusion}.
Although these approaches provide some degree of privacy protection, they often lack rigorous mechanisms to quantify the guarantees they offer, which may make them insufficient for compliance with strict privacy regulations.
By contrast, the LDP mechanism offers a robust framework with formal guarantees, making it a reliable and regulation-compliant approach for safeguarding privacy in NLP applications~\cite{cummings2018role}.

Since the primary goal is to maintain the usefulness of the NLP model, the privacy-preserving technique must achieve a good balance between privacy and utility. 
In this regard, the LDP mechanism imposes a high privacy standard, requiring that any two samples produce similar and indistinguishable output distributions~\cite{feyisetan2020privacy}. 
This often results in outputs that lack sufficient information, thereby harming downstream tasks. 
To address this issue, a common practice is to implement a relaxed version of LDP known as $d_\chi$-privacy~\cite{feyisetan2020privacy,qu2021natural,yiwen2018utility}.
$d_\chi$-privacy requires the output distribution to be proportionate with respect to the distance between any two samples.
Consequently, the outputs are likely to preserve more semantic information as related by the distance function.

\citet{qu2021natural} proposed a $d_\chi$-LDP mechanism that incorporates noise from a random distribution into the user's input.
The noise introduced complies with formal privacy guarantees under the privacy parameter $\epsilon$, which determines the amount of noise introduced to the embedding vector. 
They outline three distinct methods for noise application across LLM components: the input text, token embeddings, and sequence embeddings. 
Although applying noise at the sequence embedding level has the least impact on performance, it requires access to the remote LLM, which may be restricted due to proprietary limitations.
While this approach is straightforward to implement and ensures privacy, it demands a high privacy parameter for effective protection. 
This may inadvertently facilitate token reconstruction, even by relatively simple adversaries~\cite{harel-etal-2024-protecting}.

As an alternative to this noise-based mechanism, SanText~\cite{yue-etal-2021-differential} employs an exponential variant of LDP that utilizes random sampling from a list of tokens~\cite{mcsherry2007mechanism}.
The approach computes a distance metric between each input token and all other tokens in the vocabulary, typically using euclidean distance in the embedding space to capture semantic relationships.
Using these distances and a privacy parameter $\epsilon$, SanText employs an exponential sampling mechanism to replace original tokens: given a value of $\epsilon$, the probability of selecting each replacement token is proportional to $e^{-\epsilon \dist{}/2}$, where $\dist{}$ is the distance to the original token. Thus, tokens with smaller distances (higher semantic similarity) to the original have a higher probability of being selected as replacements.
This approach offers a better trade-off between performance and privacy compared to \citet{qu2021natural}'s method.

While innovative, SanText's sampling procedure has a key limitation: since replacement tokens are sampled from the entire vocabulary, there remains a non-negligible probability of selecting tokens that are semantically dissimilar to the original, potentially degrading utility without meaningful privacy benefits. 
To address this limitation, \custext{}~\cite{chen-etal-2023-customized} introduces a refined approach.
Instead of sampling from the complete token set, it first generates a candidate pool of the $K$ most semantically-similar tokens to the original. 
The mechanism then applies SanText's exponential sampling procedure to this restricted set. 
By constraining the sampling space to semantically-related tokens, \custext{} achieves superior utility even at lower values of the privacy parameter $\epsilon$, as it guarantees replacements that better preserve the original token's meaning.

All these methods provide valuable privacy guarantees, but several important directions remain unexplored.
For example, the potential benefits of incorporating contextual information, the applicability to generative LLMs, and evaluation under practical attack scenarios are yet to be investigated.
Of these, the role of \textbf{context} appears to be most crucial, given its centrality in textual language data and fundamental duty in LLMs.
\citet{harel-etal-2024-protecting} introduced a privacy-preserving mechanism leveraging both contextual and semantic information to enhance LLM utility. 
The \convdef{} mechanism encodes each token's embedding vector alongside its neighboring tokens' vectors, effectively preserving contextual nuances. 
However, the absence of a random component constrains its privacy guarantees, limiting the mechanism's comprehensive privacy protection.

\section{Context- and Semantic-based Privacy Preservation}
\label{sec:fancy}
One method commonly employed for ensuring local differential privacy involves injecting a controlled amount of \emph{noise} into various model components, thereby obfuscating the original input. 
These components may include sequence embeddings, token embeddings, or the tokens themselves~\cite{mosallanezhad-etal-2019-deep,feyisetan2020privacy,lyu-etal-2020-differentially,qu2021natural,zhou-etal-2022-textfusion}.
The success of NLP models in most tasks, however, is primarily attributed to their ability to leverage contextual information. 
Thus, privacy-preserving mechanisms that incorporate contextual information have the potential to achieve better downstream task performance compared to those that do not consider context.

We introduce \econvdef{}, a privacy preservation technique based on the \convdef{} mechanism and the \dchi{} noise mechanism.
Like \convdef{}, \econvdef{} encodes contextual information from neighboring tokens. 
Incorporating the \dchi{} noise component enhances privacy protection and ensures compliance with LDP and privacy regulations.

\subsection{The \convdef{} Mechanism}
The \convdef{}~\cite{harel-etal-2024-protecting} privacy mechanism operates by replacing each token $t_i, \forall i \in \{0\ldots N\}$ (where $N$ is the number of tokens in the input) with a new token, $t_i \rightarrow \tilde{t}_i$, that aims to preserve both semantics and context.
To derive the new token $\tilde{t}_i$, information from its $L$ neighboring tokens $t_k, \forall k \in \{{i - \frac{L}{2}, \ldots, i + \frac{L}{2}}\}$ is incorporated. 
This is achieved by obtaining the embedding vector representations, $\phi_k$, of each of the $L+1$ tokens using an embedding lookup table, $\mathbf{E} \in \mathbb{R}^{|\mathcal{V}| \times ||\phi||}$, where $|\mathcal{V}|$ is the size of the vocabulary and $||\phi||$ is the dimension of the vector.
Next, the embedding vectors, $\phi_k = \mathbf{E}[t_k]$, are combined using a normalized weighted function $f_k$ (satisfying $\sum_{k=i - \frac{L}{2}}^{i + \frac{L}{2}} f_k = 1$) to generate a new embedded quasi-vector $\tilde{\phi}_i$. 
The new perturbed token $\tilde{t}_i$ is then selected as the token from the vocabulary $\mathcal{V}$ with the smallest distance to $\tilde{\phi}_i$ that is different than the original one, i.e., $\argmin_{t_j \in \mathcal{V}} \dist{} ( \mathbf{E}[t_j], \tilde{\phi}_i)$, where ${\dist{}}$ is a distance scoring function.

Although the \convdef{} mechanism obscures the original token and offers a certain level of privacy, it does not adhere to the principles of differential privacy due to the absence of a randomized component. 
This deterministic nature not only limits its compliance with formal privacy guarantees but also renders it highly vulnerable to reconstruction attacks, as adversaries can exploit the lack of randomness to reverse-engineer the original data.

\subsection{The \dchi{} Mechanism} \label{sec:dchimechanism}
The relaxed \dchi{} mechanism, or the \noise{} mechanism, proposed by \citet{feyisetan2020privacy} and \citet{qu2021natural} requires that for any two inputs $x, x' \in X$ with a randomized function $M: X \rightarrow X$, the following holds:
\begin{equation}
    \frac{Pr[M(x) = y]}{Pr[M(x') = y]} = e^{\eta \dist{}(x, x')}, y \in X,
\end{equation}
where $\dist{}(x, x')$ is a distance function, such as cosine similarity or euclidean distance, and $\eta$ is a privacy parameter.
This condition guarantees that the probability distributions over outputs for similar inputs are close.
For instance, if the distance function is defined over the embedding space, smaller distances can correlate to semantic similarity.

To apply the \dchi{} mechanism in LLMs, \citet{qu2021natural} proposed that the randomized function be defined as $M(x) = x + \mathbf{p}$, where $x \in \mathbb{R}^{||\phi||}$ represents the sequence embedding vector representation, and $\mathbf{p} \in \mathbb{R}^{||\phi||}$ is a noise vector sampled from a distribution whose density is proportionate to $\eta$.
Specifically, we generate $\mathbf{p}$ by multiplying a sample from a Gamma distribution $\Gamma (||\phi||, 1/\eta)=\frac{r^{||\phi|| - 1} \eta^{||\phi||} e^{-r\eta}}{(||\phi|| - 1)!}$ and a uniform sample from a unit hypersphere ($\frac{\Gamma(||\phi||/2)}{||\phi||\pi^{||\phi||/2} }$).

For the token-level implementation of this mechanism, we introduce pre-processing and post-processing stages.
The pre-processing step involves obtaining the embedding vector representation of the token $t$ from an embedding lookup table $\mathbf{E}[t]$.
Then, the \dchi{} mechanism is applied on $\mathbf{E}[t]$ as described.
Finally, the post-processing step returns the token $t'$ that is closest (according to the distance function $d$) to the output of the \dchi{} mechanism, effectively yielding a noised but semantically similar token.

\subsection{The \econvdef{} Mechanism}
Our proposed method, \econvdef{}, integrates the \dchi{} mechanism with the \convdef{} approach, thereby maintaining privacy guarantees while leveraging the contextual and semantic information of the text to better preserve utility.
The \econvdef{} mechanism follows the same procedure as the \convdef{} mechanism, with one key difference. 
After computing the quasi-embedding vector $\tilde{\phi}_i$, a calibrated noise $\mathbf{p}$, derived as explained in \S\ref{sec:dchimechanism}, is added. 
The process is described in Algorithm~\ref{alg:dchistencil}.

\begin{algorithm}[t!]
\caption{Overview of the \convdef{} Mechanism.}\label{alg:dchistencil}
\hspace*{\algorithmicindent}
\small
\textbf{Input}: Embeddings table $\mathbf{E}$, $N$ tokens, $\sigma$, $L$  
\begin{algorithmic}

\For {$i =0 \ldots N$}
    \State $\tilde{\phi}_i \gets 0$
    \For {$j = \min\left(0,i-\frac{L}{2}\right)\dots\max\left(i+\frac{L}{2},N\right)$}
        \State $\phi_j \gets \mathbf{E}[t_i] $
        \State $f(j, \sigma) \gets e^{-j^2/2\sigma^2}$
        \State $\tilde{\phi}_i \gets \tilde{\phi}_i + f(j,\sigma) \cdot \phi_j$
    \EndFor
    \State $\tilde{\phi}_i \gets  \tilde{\phi}_i /\sum_jf(j,\sigma) + \mathbf{p}$
    \State $\tilde{t}_i \gets  \argmin_{t_j \in \mathcal{\mathcal{V}}} \dist{}\left( \mathbf{E}[t_j], \tilde{\phi}_i\right) $
\EndFor
\end{algorithmic}
\end{algorithm}

The privacy-utility trade-off of the \econvdef{} method can be controlled by adjusting the window size $L$, the properties of the weighted function $f$, and $\eta$. 
In our study, we employ a Gaussian smoothing function as the weighting function, where the standard deviation $\sigma$ plays a pivotal role in balancing privacy and downstream task performance.
For odd-numbered window size, the Gaussian weights concentrate on the original token, which may enhance accuracy but also lower privacy. 
In contrast, an even-numbered window size, the highest weights are distributed between the original token and its nearest neighbor, resulting in a more balanced allocation.
In \S\ref{sec:parameterstudy}, we analyze the impact of these parameters on accuracy and privacy.

\subsection{Differential Privacy of \econvdef{}}

In this section, we prove that \econvdef{} is $d_\chi$-private, with the main result stated in \autoref{th:privacy}. As preliminary steps preceding the theorem, we define the distance functions for which the theorem applies, state the noise distribution $\mathbf{p_i}$ in more convenient form and prove Lemma~\ref{lm:contrib}, which is a bound on the overall contribution of a specific token to the output. 

We prove \autoref{th:privacy} for two distance functions defined as follows. Consider two vectors $x=(x_1,\ldots,x_N),(x'_1,\ldots,x'_N) \in \left(\mathbb{R}^\ell\right)^N$ and define the first distance function as the sum of the Euclidean distances between the components, i.e., 
$$d_2(x,x')=\sum_{i=1}^N \left\vert \left\vert x_i-x'_i \right\vert\right\vert,$$
where $\left\vert \left\vert z_i \right\vert\right\vert=\sqrt{z^2_{i,1}+\ldots+z^2_{i,\ell}}$ is the Euclidean norm for any $z_i=(z_{i,1},\ldots,z_{i,\ell}) \in \mathbb{R}^\ell$. The second distance function is defined as the sum of the cosine distances between the components, i.e.,
$$d_C(x,x')=\sum_{i=1}^N 1-\frac{\langle x_i,x'_i \rangle}{\left\vert \left\vert x_i \right\vert\right\vert \cdot \left\vert \left\vert x'_i \right\vert\right\vert},$$
where $\langle \cdot,\cdot \rangle$ denotes the inner product in $\mathbb{R}^\ell$ and $\frac{\langle x_i,x'_i \rangle}{\left\vert \left\vert x_i \right\vert\right\vert \cdot \left\vert \left\vert x'_i \right\vert\right\vert}$ is cosine similarity.

We define the perturbation $\mathbf{p_i}$ in the algorithm using the Laplacian density function in $\ell$ dimensions from \cite{fernandes2019generalised}. The function with parameter $\epsilon>0$, is defined for every vector $v=(v_1,\ldots,v_\ell) \in \mathbb{R}^\ell$ by $\Lap_{\ell,\epsilon}(v)=c e^{-\epsilon ||v||}$, where $||v||$ is the Euclidean norm and $c=c(\ell,\epsilon)$ is chosen so that the associated cumulative distribution function is  correct. That is, it is chosen so that $\int_{-\infty}^{\infty} \ldots \int_{-\infty}^{\infty} c e^{-\epsilon ||v||} dv_1\ldots dv_\ell=1$.

\newcite{fernandes2019generalised} prove that the noise sampled in Algorithm~\ref{alg:dchistencil} as a product of a gamma distribution and a uniform distribution on specific domains is distributed with the Laplacian density function in $\ell$ dimensions if $\mathbf{E}(t_i) \in \mathbb{R}^\ell$ for every token $t$. 
In the \econvdef{} mechanism, each token $t_i$ is associated with $\tilde{\phi}_i$, a weighted sum of the embeddings of all tokens $t_j$ within a window of at most $L$ tokens around $t_i$. Denote the weights used to compute $\tilde{\phi}_i$ by $f'_{i,j}$ before normalization and by $f_{i,j}=f'_{i,j}/\sum_j f'_{i,j}$ after normalization, which implies that $\sum_j f_{i,j}=1$. The specific weights $f'_{i,j}$ used in the algorithm have two important properties: {\em symmetry} around $i$, meaning that $f'_{i,i-j}=f'_{i,i+j}$ for all $i,j$, and {\em decay} with distance from $i$, i.e. $f'_{i,j'} \le f'_{i,j}$ for all $i$, and all $j,j'$ such that $|i-j| \le |i-j'|$. 

Now, instead of considering the contribution of all neighboring tokens $t_j$ to $\tilde{\phi}_i$, we estimate the total contribution of a token $t_j$ to the $\tilde{\phi}_i$ of all its neighbors. Let $C_j=\sum_{i=1}^N f_{i,j}$ be the {\em contribution} of a token $t_j$. The following lemma bounds $C_j$.

\begin{lemma}
    \label{lm:contrib}
    For any non-negative integers $N,L$ the $\econvdef{}$ mechanism satisfies the two following properties:
    \begin{itemize}
        \item $C_j=1$ for all $j$ such that $L \le j \le N-L$.
        \item $C_j \le 2$ for all $j$ such that $0 \le j \le N$.
    \end{itemize}
\end{lemma}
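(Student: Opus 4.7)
My plan is to exploit the two structural properties isolated in the paper: symmetry of the weights around $i$, and their decay with distance from $i$. Writing $g(k) := f'_{i,i+k}$ for the common kernel (the Gaussian $e^{-k^2/(2\sigma^2)}$ in the algorithm) and $Z := g(0)+2\sum_{k=1}^{L/2}g(k)$ for the full-window normalization, we have $f_{i,j}=g(|i-j|)/Z_i$ with $Z_i=\sum_{k\in W_i} g(|i-k|)$, where $W_i$ denotes the truncated window around index $i$.

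For the first bullet, whenever $L\le j\le N-L$, every index $i$ with $|i-j|\le L/2$ lies in $[L/2,\,N-L/2]$, so its window is complete and $Z_i=Z$. Summing the $L+1$ contributions,
\begin{equation*}
    C_j \;=\; \frac{1}{Z}\sum_{|d|\le L/2} g(|d|) \;=\; \frac{g(0)+2\sum_{k=1}^{L/2}g(k)}{Z} \;=\; 1,
\end{equation*}
using the symmetry of $g$ to reassemble $Z$ from the summands.

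For the second bullet, the key estimate I would prove is $Z_i\ge Z/2$ for every $i\in\{0,\ldots,N\}$, at least in the non-degenerate regime $N\ge L-1$. A simple case analysis suffices: if $i\ge L/2$ then $W_i$ contains the complete left half $\{i-L/2,\ldots,i\}$, and if $i<L/2$ then $i+L/2\le L-1\le N$, so $W_i$ contains the complete right half $\{i,\ldots,i+L/2\}$. By symmetry of $g$, either half has weight sum $(Z+g(0))/2\ge Z/2$. Consequently $f_{i,j}\le 2g(|i-j|)/Z$, and after extending the index sum over $i$ to all integer offsets of magnitude at most $L/2$,
\begin{equation*}
    C_j \;\le\; \frac{2}{Z}\sum_{|d|\le L/2} g(|d|) \;=\; \frac{2Z}{Z} \;=\; 2.
\end{equation*}

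The main obstacle I anticipate is the degenerate regime $N<L-1$, where some windows are truncated on both sides and the half-window argument breaks down. I would handle it directly by writing $Z_i=G_i+G_{N-i}-g(0)$ with $G_m=\sum_{k=0}^{m}g(k)$; decay of $g$ forces $Z_0=Z_N=G_N$ to be the minimum of the $Z_i$'s while $Z_i\le 2G_N-g(0)$ is the maximum, yielding the uniform comparison $Z_i\ge Z_j/2$ and hence $C_j\le 2$ by the same calculation. A clean unified phrasing of both regimes is the single inequality $Z_i\ge Z_j/2$ whenever $i\in W_j$, from which $C_j\le (2/Z_j)\sum_{i\in W_j}g(|i-j|)=2$ follows immediately.
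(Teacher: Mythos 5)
Your proof is correct and follows essentially the same route as the paper's: the first bullet via symmetry of complete windows (so $f_{i,j}=f_{j,i}$ and the column sum equals a row sum equal to $1$), and the second via the observation that every window contains at least one complete half-window, so by symmetry and decay each normalizer satisfies $Z_i \ge Z_j/2$ for $i \in W_j$ — which is precisely the paper's key claim $F_i \ge F_j/2$. Your phrasing (comparing $Z_i$ to the fixed full-window constant $Z$ in the main case, and the explicit formula $Z_i = G_{\min(i,L/2)}+G_{\min(N-i,L/2)}-g(0)$ for truncated windows) is if anything a cleaner bookkeeping of the same argument.
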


\begin{proof}
    If $L \le j \le N-L$, then $t_j$ contributes to values $\tilde{\phi}_i$ in the range $L/2 \le j-L/2 \le i \le j+L/2 \le N-L/2$. The window for each of the associated tokens $t_i$ is of length $L$, and therefore the sequence of weights $(f_{i,i-L/2},\ldots,f_{i,i+L/2})$ is identical for all these tokens, including $t_j$. It follows that due to the symmetry property noted above, the contribution of $t_i$ to $\tilde{\phi}_j$ is equal to the contribution of $t_j$ to $\tilde{\phi}_i$, or formally $f_{i,j}=f_{j,i}$. Therefore, $C_j=\sum_{i=0}^N f_{i,j}=\sum_{i=0}^N f_{j,i}=1$, which proves the first part of the lemma. 

    For general $j$, $0 \le j \le N$, let $L'$ (respectively $L''$) be the number of tokens to the left (resp. right) of $t_j$ that contribute to $\tilde{\phi}_j$, and assume w.l.o.g. that $L' \le L'' \le L/2$. Let $F_i=\sum_j f_{i,j}$ be the normalization factor for the $i$-th token, that is, $f_{i,j}=f'_{i,j}/F_i$ for all $i,j$. By the symmetry of the weights we have that $f'_{i,j}=f'_{j,i}$. Note that symmetry implies that at least half the weight $F_j$ is concentrated in $f'_{j,j}$ and to its right, that is $f'_{j,j}+\sum_{i=1}^{L''} f'_{j+i,j} \ge F_j/2$. To prove the second part of the lemma's statement, we show that $F_i \ge F_j/2$ for all $j'-L' \le i \le j'+L''$.

    We start with $t_{j-L'}$, which is the leftmost token that receives contribution from $t_j$. That token receives contribution from at least $L''$ tokens to the right of it, since it is not to the right of $t_j$. Therefore, $F_{j'-L'} \ge f'_{j,j}+\sum_{i=1}^{L''} f'_{j+i,j} \ge F_j/2$. We now move right from $t_{j-L'}$ maintaining for each token a set of $L''+1$ tokens that contribute to it. For $t_i$ in the range $j-L' \le i \le j-L'+L''$ that set is exactly $\{t_{j-L'},\ldots,t_{j-L'+L''}\}$, while in the range $j-L'+L'' < i \le j+L''$ that set is $\{t_{i-L''},\ldots,t_i\}$. In the second range, due to symmetry, $F_i=F_{j-L'} \ge F_j/2$. In the first range, due to both symmetry and decay with distance from the token, we have that $F_i \ge F_{j-L''} \ge F_j/2$. 

    Therefore, $C_j=\sum_{i=0}^N f_{i,j}=\sum_{i=0}^N f'_{i,j}/F_i \le \sum_{i=0}^N 2f'_{j,i}/F_j=2$, which completes the proof of the lemma.

\end{proof}

The following theorem states the differential privacy property that the \econvdef{} mechanism satisfies. Its proof depends on the Laplace distribution of the noise.

\begin{theorem}
\label{th:privacy}
For all non-negative integers $\ell,L,N$, for all embedding functions $\mathbf{E}$ mapping tokens to $\mathbb{R}^\ell$, and for every $\epsilon>0$, the \econvdef{} mechanism with parameters $L, N, \mathbf{E}$ and Laplace noise with parameter $\epsilon$ is $2\epsilon$-$d_\chi$-private for both the $d_2$ and the $d_C$ distance functions. 
\end{theorem}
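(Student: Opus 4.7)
The plan is to establish $2\epsilon$-$d_\chi$-privacy first for the joint distribution of the noisy quasi-embeddings $(\tilde\phi_1,\ldots,\tilde\phi_N)$ and then invoke post-processing for the deterministic un-embedding step $\tilde{t}_i \gets \argmin_{t_j\in\mathcal{V}}\dist(\mathbf{E}[t_j],\tilde\phi_i)$. Since the noise vectors $\mathbf{p}_i$ across positions are drawn independently, the joint output density factors, so the log-likelihood ratio splits as a sum over positions.

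Writing $\mu_i(x)=\sum_j f_{i,j} x_j$ for the weighted embedding mean produced at position $i$ on input $x=(x_1,\ldots,x_N)\in(\mathbb{R}^\ell)^N$, the Laplacian density $\Lap_{\ell,\epsilon}(v)=c e^{-\epsilon\|v\|}$ and the reverse triangle inequality give
\[\ln\frac{\Pr[M(x)=y]}{\Pr[M(x')=y]}=\epsilon\sum_{i=1}^N\bigl(\|y_i-\mu_i(x')\|-\|y_i-\mu_i(x)\|\bigr)\le\epsilon\sum_{i=1}^N\|\mu_i(x)-\mu_i(x')\|.\]
Convexity of the norm together with $\sum_j f_{i,j}=1$ then pushes the norm inside the weighted sum,
\[\|\mu_i(x)-\mu_i(x')\|=\Bigl\|\sum_j f_{i,j}(x_j-x'_j)\Bigr\|\le\sum_j f_{i,j}\|x_j-x'_j\|,\]
and swapping the two summations turns the double sum into $\sum_j C_j\|x_j-x'_j\|$. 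Lemma~\ref{lm:contrib} supplies $C_j\le 2$, so the log-ratio is bounded by $2\epsilon\,d_2(x,x')$, which settles the $d_2$ part of the theorem. The $d_C$ case follows the same template once the per-coordinate Euclidean term is exchanged for cosine distance, either by restricting $\mathbf{E}$ to unit vectors so that the identity $\|u-v\|^2=2(1-\langle u,v\rangle)$ translates Euclidean into cosine distance, or by invoking the spherical version of the generalized Laplacian of \newcite{fernandes2019generalised} and keeping the rest of the chain untouched.

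The main obstacle is making the cosine step interact cleanly with the Euclidean-ball Laplace density; everything upstream---the independence decomposition, the two applications of the (reverse) triangle inequality for the norm, and the sum-swap---is fairly standard once the contribution bound is in hand. The factor of $2$ in the guarantee comes precisely from Lemma~\ref{lm:contrib}: a naive accounting would overcount the influence of a single token by a factor proportional to the window width $L$, and it is exactly the symmetry and decay of the weights, together with the compensating window-boundary normalization factors, that collapse this into the tight constant $2$.
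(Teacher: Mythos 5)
Your argument for the $d_2$ case is correct and is essentially the paper's proof: independence of the $\mathbf{p}_i$ factors the output probability over positions, the reverse triangle inequality applied to the density $ce^{-\epsilon\|v\|}$ gives the per-position bound $e^{\epsilon\|\mu_i(x)-\mu_i(x')\|}$, subadditivity of the norm and the sum swap yield $\sum_j C_j\|x_j-x'_j\|$, and Lemma~\ref{lm:contrib} supplies $C_j\le 2$. The only cosmetic difference is that you frame the un-embedding step as post-processing of the continuous noisy quasi-embeddings, whereas the paper integrates the Laplace density directly over the Voronoi cells $A_{y_i}$ of the output tokens; these are equivalent, and your framing is arguably cleaner. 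Your remark that the constant $2$ is exactly what Lemma~\ref{lm:contrib} buys (versus a naive factor of order $L$) also matches the paper and its Remark~\ref{rm:pes}.

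The genuine gap is the $d_C$ case, which you do not actually prove: you name two candidate routes and explicitly defer the "main obstacle" of reconciling the cosine metric with the Euclidean-ball Laplace density. The paper's resolution is your first route: assume the embeddings are normalized to the unit sphere, so the Euclidean result applies verbatim to the normalized vectors, and then relate the two metrics there via $CD(x_i,x'_i)=\tfrac{1}{2}\|x_i-x'_i\|^2\le\|x_i-x'_i\|$. You should be aware, however, that your instinct about this being the delicate step is well founded: to pass from the established bound $e^{2\epsilon\sum_j\|x_j-x'_j\|}$ to the claimed bound $e^{2\epsilon d_C(x,x')}$ one needs $\|x_j-x'_j\|\le CD(x_j,x'_j)$ per coordinate, whereas the identity on the sphere gives the reverse inequality $CD(x_j,x'_j)\le\|x_j-x'_j\|$ (since $\|x_j-x'_j\|\le 2$). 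In other words, $d_C\le d_2$ on the sphere, and $d_\chi$-privacy with respect to the smaller metric is the \emph{stronger} statement, so it does not follow from the $d_2$ guarantee by monotonicity. Closing this correctly requires a separate argument (e.g., redoing the per-position bound with a quantity that is controlled by cosine distance, or settling for a constant worse than $2$ via $\|x_j-x'_j\|\le\sqrt{2\,CD(x_j,x'_j)}$ and a bound on the diameter); as written, neither of your two sketched routes, nor a direct appeal to the spherical Laplacian of \newcite{fernandes2019generalised} without further work, completes this step.
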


\begin{proof}

Let $t=(t_1,\ldots,t_N),t'=(t'_1,\ldots,t'_N)$ be two sequences of $N$ input tokens and let $x=\mathbf{E}(t)=(x_1,\ldots,x_N), x'=\mathbf{E}(t')=(x'_1,\ldots,x'_N)$ be the associated sequences of embeddings in $\mathbb{R}^\ell$. Let $M(x)$ denote the output of \econvdef{} on a sequence of embeddings $x$. If $y=M(x)$, then  $y=(y_1,\ldots,y_N)$ is a sequence of $N$ output embeddings. 

The proof proceeds in three steps. The first step assumes that \econvdef{} uses Euclidean distance and gives a bound on the probability that the $i$-th output component is $y_i$ as a function of the weighted distance between the components of $x$ and $x'$ around $i$. The second uses the independence of the $\mathbf{p_i}$ noise components to prove that \econvdef{} is $2\epsilon$-$d_\chi$-private for Euclidean distance between the embeddings. The third step proves the theorem for cosine distance.

For every $1 \le i \le N$, let $A_{y_i} \subseteq \mathbb{R}^\ell$ be the subset of all vectors that are closer to $y_i$ than to $z$ for any possible output $z$. \econvdef{} with input $t$ returns output $y$ if and only if for every $i=1,\ldots,N$, the weighted sum of the embeddings in $x$ together with the $i$-th noise component $\mathbf{p_i}$ is in $A_{y_i}$. Let  $r_i+\mathbf{p_i}=\sum_j f_{i,j} x_j+\mathbf{p_i}$ be the random variable which measures the probability that on input $t$ the mechanism outputs $y_i$ in its $i$-th component. Similarly $r'_i+\mathbf{p_i}=\sum_j f_{i,j} x'_j+\mathbf{p_i}$ measures the same probability when the input is $t'$. The probability that $r_i+\mathbf{p_i} \in A_{y_i}$ is 
$$\mbox{Pr}[r_i\mathbf{p_i} \in A_{y_i}] =\int_{A_{y_i}} ce^{-\epsilon \left\vert\left\vert z -r_i \right\vert \right\vert} dz.
$$
By the reverse triangle inequality of Euclidean norm (the difference between the norms of two vectors is at least the norm of the difference between the vectors):

\begin{align*}
-\epsilon \left\vert \left\vert z-r_i \right\vert \right\vert   &=-\epsilon \left\vert \left\vert z-r_i \right\vert \right\vert + \epsilon \left\vert \left\vert z-r'_i \right\vert \right\vert - \\
&- \epsilon  \left\vert \left\vert z-r'_i \right\vert \right\vert \\
&\le  \epsilon \left\vert \left\vert r_i-r'_i \right\vert \right\vert - \epsilon \left\vert \left\vert z-r'_i \right\vert \right\vert.
\end{align*}

Therefore: 
$$
\mbox{Pr}[\mathbf{q_i} \in A_{y_i}] \le e^{\epsilon \left\vert \left\vert r_i-r'_i \right\vert\right\vert} \cdot \int_{A_{y_i}} ce^{-\epsilon \left\vert\left\vert z -r'_i \right\vert \right\vert} dz,
$$
which is $e^{\epsilon \left\vert \left\vert r_i-r'_i \right\vert\right\vert} \cdot \mbox{Pr}[\mathbf{q'_i} \in A_{y_i}]$. This completes the first step of the proof.

In the second step, we prove that in the case of Euclidean distance \econvdef{} is $2\epsilon$-$d_\chi$-private. Observe that due to the independence of  $\mathbf{p_i}$ the probability that the mechanism outputs $y=(y_1,\ldots,y_N)$ is the product of the probabilities that the $i$-th component of the output is $y_i$ for all $i$.
Therefore,
\begin{align*}
\mbox{Pr}[M(x)=y] &= \prod_{i=1}^N \mbox{Pr}[r_i+\mathbf{p_i} \in A_{y_i}]\\
&\le \prod_{i=1}^N e^{\epsilon \left\vert \left\vert r'_i-r_i \right\vert\right\vert} \mbox{Pr}[r'_i+\mathbf{p_i} \in A_{y_i}]\\
&= e^{\epsilon \cdot \sum_{i=1}^N \left\vert \left\vert r'_i-r_i \right\vert\right\vert} \cdot  \mbox{Pr}[M(x')=y].
\end{align*}

Observe further that due to the definition of $r_i,r'_i$ and to triangle inequality:
\begin{align*}
\left\vert \left\vert  \sum_{i=1}^N r'_i-r_i\right\vert \right\vert   &\le \sum_{i=1}^N \sum_{j=1}^N f_{i,j} \left\vert \left\vert x'_i-x_i \right\vert \right\vert \\
&=  \sum_{j=1}^N \sum_{i=1}^N f_{i,j} \left\vert \left\vert x'_j-x_j \right\vert \right\vert.
\end{align*}
But, $\sum_{i=1}^N f_{i,j}=C_j$, where $C_j$ is the contribution of the $j$-th token to all other tokens. By Lemma \ref{lm:contrib}, $C_j \le 2$ for all $j$.

Recall that $d_2(x,x')=\sum_{j=1}^N \left\vert \left\vert x'_j-x_j \right\vert \right\vert$, and therefore,

\begin{align*}
\mbox{Pr}[M(x)=y] &\le    e^{\epsilon \cdot \sum_{i=1}^N \left\vert \left\vert r'_i-r_i \right\vert\right\vert} \cdot  \mbox{Pr}[M(x')=y]\\
&\le e^{2\epsilon \sum_{j=1}^N \left\vert \left\vert x'_j-x_j \right\vert\right\vert} \mbox{Pr}[M(x')=y]\\
&\le e^{2\epsilon d_2(x,x')} \cdot  \mbox{Pr}[M(x')=y],
\end{align*}
which completes the proof of the second step.

To prove the third step, we note that the privacy for Euclidean distance holds for any sequence of embeddings $\mathbb{E}(t_i)$ in Euclidean space $\mathbb{R}^\ell$. In particular, it holds if all the embeddings lie on the unit sphere, which would be the case if the vectors are normalized by dividing each vector by its Euclidean norm. 

It is well-known that the cosine distance $CD$ between two vectors $x_i,x'_i$ on the unit sphere is equal to $\frac{1}{2}\left\vert \left\vert x'_i-x_i \right\vert\right\vert^2$, for the Euclidean norm $\left\vert \left\vert \cdot \right\vert\right\vert$. On the unit sphere $\left\vert \left\vert x'_i-x_i \right\vert\right\vert \le 2$, which implies that $CD(x_i,x'_i) \le \left\vert \left\vert x'_i-x_i \right\vert\right\vert$. Plugging this into the final equations of the previous step we have that

\begin{align*}
\mbox{Pr}[M(x)=y] 
&\le e^{2\epsilon \sum_{j=1}^N \left\vert \left\vert x'_j-x_j \right\vert\right\vert} \mbox{Pr}[M(x')=y]\\
&\le e^{2\epsilon \sum_{j=1}^N CD(x_j,x'_j)} \mbox{Pr}[M(x')=y]\\
&\le e^{2\epsilon d_C(x,x')} \cdot  \mbox{Pr}[M(x')=y],
\end{align*}
which completes the proof.

\end{proof}

\begin{remark}
    \label{rm:pes}
    The conclusions of \autoref{th:privacy} regarding the privacy of \econvdef{} are pessimistic for large $N$.  Lemma \ref{lm:contrib} proves that the contribution $C_j$ of the $j$-th token is at most $2$ in the first and last $L$ tokens, but is exactly $1$ otherwise. Repeating the proof of \autoref{th:privacy} with this tighter bound on $C_j$, instead of the looser $C_j \le 2$ for all $j$, results in the mechanism providing $2\epsilon$-$d_\chi$ privacy for the two $L$ token substrings at ends of the input, but $\epsilon$-$d_\chi$ privacy for the substring in the middle of the input.
\end{remark}

\section{Experiments}

We evaluate the impact of \econvdef{} using both odd- and even-numbered window sizes on downstream task performance, comparing its effectiveness against \custext{}, \convdef{}, and the \noise{}.
To provide a comprehensive comparison, we examine the different privacy preserving techniques on two distinct LLMs: Google's T5-Flan large variant \cite[\flan{};][]{t5flan}\footnote{\url{https://huggingface.co/google/flan-t5-large}} and Qwen2.5-1.5B-Instruct \cite[\qwen{};][]{qwen2}\footnote{\url{https://huggingface.co/Qwen/Qwen2.5-1.5B-Instruct}}.
The models were used without fine-tuning on a specific task.
The embedding lookup table utilized in this study is GloVe~\cite{pennington-etal-2014-glove},\footnote{\url{https://nlp.stanford.edu/projects/glove/}} which contains 2.2 million tokens and was trained on a corpus of 840 billion tokens.

For all methods, the distance score $\dist{}$ was calculated using cosine similarity, i.e., $\dist{} = \frac{\mathbf{E}[t_j] \cdot \tilde{\phi}_i}{\left\| \mathbf{E}[t_j]\right\| \cdot \left\| \tilde{\phi}_i\right\|}$.
All experiments were conducted on a server with two AMD EPYC 7763 64-Core processors and an NVIDIA RTX 6000.

\paragraph{Stopword exclusion} Identifying sensitive words, such as those involved in named entity recognition (e.g., names, addresses, workplaces), is a challenging task typically approached using statistical methods~\cite{liu2017identification,cohn-etal-2019-audio,poostchi-etal-2018-bilstm,friedrich-etal-2019-adversarial}. 
As a result, our mechanism treats all tokens as sensitive, since we cannot reliably distinguish sensitive from non-sensitive tokens. 
However, since treating stopwords as non-sensitive may pose a low privacy risk~\cite{chen-etal-2023-customized}, we apply all the privacy preserving mechanisms to all words except stopwords.

\subsection{Datasets}
To evaluate the impact of various privacy-preserving techniques on the performance of generative LLMs, we select several benchmark datasets that assess a wide range of model capabilities.
By testing our techniques on these well-established datasets, we ensure that our privacy interventions are assessed in terms of their real-world effectiveness, offering a clear understanding of how they affect model performance on standard NLP tasks.
The benchmarks that were selected are included in the LM evaluation harness~\cite{eval-harness}: SST2~\cite{sst2}, QNLI~\cite{wang-etal-2018-glue}, SWAG~\cite{zellers-etal-2018-swag}, and MMLU~\cite{mmlu}.

Each privacy-preserving technique was applied to the test set of the datasets, creating a privatized test set. 
The resulting privatized test set was subsequently used to evaluate both utility and privacy, without any model training on the original or privatized data.

During the experiments we observed that some tokens were out of vocabulary (OOV) i.e., were not included in the embeddings table of GloVe.
For these OOV tokens, their original forms were retained, which improved model accuracy but also lowered privacy.
Nevertheless, the amount of OOVs for all datasets was lower than 10\% and all privacy-preserving techniques were similarly affected, ensuring that the comparisons remain valid and fair.

\subsection{Nearest-neighbor Reconstruction}
\label{ssec:nnr}

The core principle of the privacy-preserving mechanisms introduced, namely \custext{}, \convdef{}, \econvdef{}, and \noise{}, is to maintain semantic relationships to minimize their impact on the LLM's performance. 
This is achieved by substituting tokens with semantically similar alternatives, determined by the embedding space as explained: $\argmin_{t_j \in \mathcal{V}} \dist{} (\mathbf{E}[t_j], \tilde{\phi}_i)$.
However, this strategy may be vulnerable to adversarial exploitation, where an attacker attempts to reverse-engineer the substitution process under the assumption that the attacker has access to embeddings table of the privacy mechanism. 
Specifically, given a perturbed token $t'$, the attacker can extract its embedding vector representation $\mathbf{E}[t']$.
By calculating the cosine similarity or euclidean distance between this perturbed embedding and the embeddings of other tokens in the vocabulary ($\mathbf{E}[t]$ for $t \in \mathcal{V}$), the attacker can identify candidate original tokens by ranking them according to their similarity scores and selecting those above a chosen threshold (for example, the top 5 candidates), potentially revealing the original token with high probability.

To evaluate the robustness of these techniques against token inversion attacks, we implemented this attacker model and assessed whether any original token appeared among the top five nearest neighbors of the perturbed token.
Finally, we report the likelihood as the reconstruction rate Pr@5.

\subsection{Results}
\begin{figure*}[ht]
    \centering
     \includegraphics[width=0.47\linewidth]{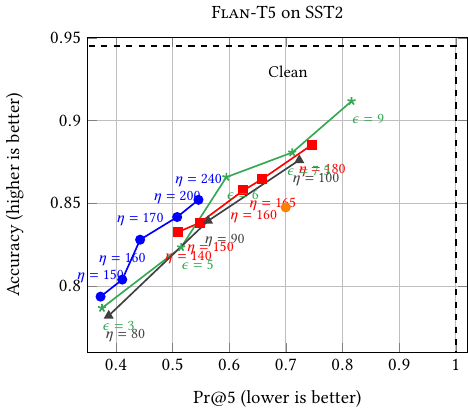}
     \includegraphics[width=0.47\linewidth]{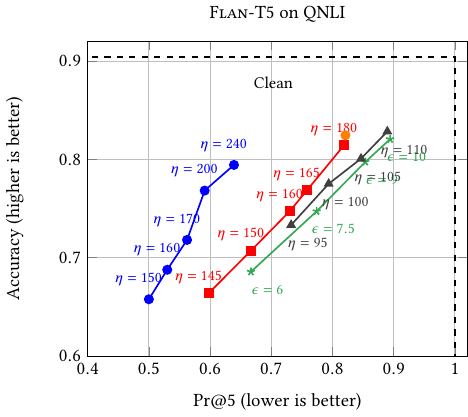}
     \\
     \includegraphics[width=0.47\linewidth]{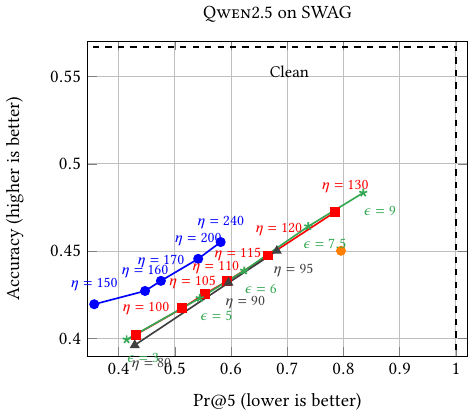}
     \includegraphics[width=0.47\linewidth]{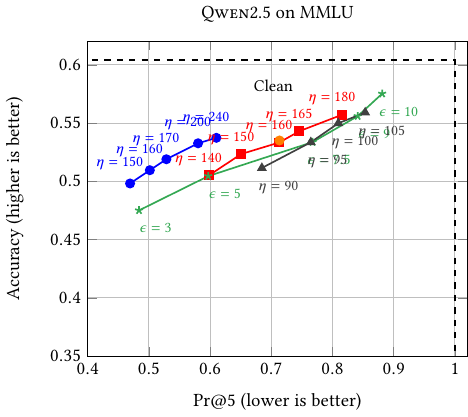}
     \\
     \includegraphics[width=0.98\linewidth]{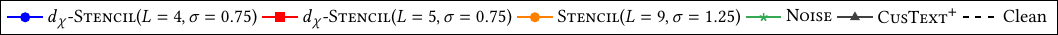}
    \caption{
    Comparison of optimal accuracy and reconstruction rates (Pr@5) across privacy-preserving mechanisms: \econvdef{} (varying $\eta$, $L$), \convdef{}, \custext{} (varying $\epsilon$), and \noise{} (varying $\eta$). 
    Results shown for: (top row) \flan{} on SST2 and QNLI datasets; (bottom row) \qwen{} on SWAG and MMLU datasets. 
    Clean baseline represents unsanitized data performance.
}
    \label{fig:flanfig}
\end{figure*}

In \autoref{fig:flanfig}, we present comparative accuracy-privacy trade-offs across multiple privacy-preserving mechanisms using \flan{} on the SST2 and QNLI datasets, and \qwen{} on the SWAG and MMLU datasets.
The charts are constructed such that better models are closer to the top left corner.
For \econvdef{}, we demonstrate results with both odd and even-numbered window sizes $L$ using optimal $\sigma$ values across varying privacy parameter $\eta$. 
We include \convdef{} results with optimal parameters ($L=5$, $\sigma=1.25$), alongside \noise{} and \custext{} with varying values of the privacy parameters $\eta$ and $\epsilon$, respectively.

Across all experimental configurations, \econvdef{} with even-numbered $L$ demonstrates superior utility-privacy tradeoffs, achieving optimal accuracy while maintaining low reconstruction rates. 
However, this configuration exhibits an inherent accuracy ceiling even at high $\eta$ values, making it suboptimal for applications prioritizing maximum utility. 
The relationship between window size parity and performance characteristics is examined in detail in \S\ref{sec:parameterstudy}.

Moreover, when excluding \econvdef{} with even-numbered $L$, the SWAG dataset exhibits minimal variance across mechanisms, likely attributable to its lower average token count. 
This suggests that the cumulative noise effect, which scales with sentence length, remains comparatively low for SWAG relative to other datasets in our evaluation.

The odd-number $L$ of \econvdef{} mechanism demonstrates better utility and privacy compared to \convdef{} and \noise{} mechanisms for all cases.
By integrating a noise component, \econvdef{} enhances privacy beyond \convdef{}, while preserving information more effectively than \noise{}, thus representing an improvement for both privacy-preserving techniques.

Our results show that contextual information has varying effects across different benchmarks, with stronger benefits observed in QNLI and MMLU compared to SST2 and SWAG datasets. 
This suggests that the effectiveness of using contextual information in privacy mechanisms may depend on the specific task being performed.

Nevertheless, across all evaluated datasets, \econvdef{} demonstrates comparable or even better utility-privacy tradeoffs, validating the effectiveness of incorporating contextual information into privacy-preserving mechanisms. 
These results suggest that context-aware approaches can enhance the fundamental trade-off between utility preservation and privacy protection in language model applications.

\section{Parameter Study}
\label{sec:parameterstudy}

To investigate the impact of window size $L$ and the standard deviation $\sigma$ of the gaussian weights $f_i$ on the accuracy and resilience against reconstruction attacks, we conducted tuning experiments for these parameters. 
We note that although these results are demonstrated using \qwen{} on the SST2 and SWAG datasets, they are consistent for other models and datasets.

\subsection{Impact of $\sigma$}
\begin{figure}[!t]
    \centering
     \includegraphics[width=0.45\textwidth]{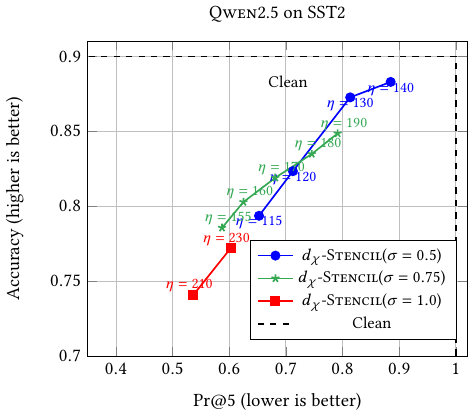}
    \caption{Accuracy-privacy utility comparison of $\sigma=0.5,0.75,1.0$ with varying $\eta$ values and a fixed size of $L=9$ using
    \qwen{} performance on the SST2 dataset. 
    The Clean trend represents the unsanitized baseline.}
    \label{fig:sigma-ablation}
\end{figure}
In \econvdef{} and \convdef{}, lower $\sigma$ values assign higher weights to the original token, rendering the perturbed token more similar to the original and reducing encoded contextual information. 
Since contextual information can help preserve utility, we empirically investigate its impact on the utility-privacy tradeoff by varying $\sigma$ (fixed $L=9$) on the SST2 dataset using \qwen{} in \autoref{fig:sigma-ablation}.

At $\sigma=0.5$, a sharp utility rise occurs with increasing $\eta$ because smaller noise vectors and higher original token weights increase the probability of token preservation. 
While optimal for accuracy, this scenario compromises reconstruction resistance.

At $\sigma=0.75$, the introduction of additional contextual information necessitates higher values of $\eta$ compared to $\sigma=0.5$. 
Notably, the results reveal that for reconstruction rates below approximately 0.7, $\sigma=0.75$ consistently achieves better utility than $\sigma=0.5$. 
This finding underscores the potential of embedding contextual information within the privacy mechanism to enhance both utility and privacy. 
However, the poor performance observed at $\sigma=1.0$ suggests that incorporating contextual information requires a more refined and strategic approach to maintain a favorable utility-privacy balance.

\subsection{Impact of Window Size $L$}

When $L$ is an odd number, the gaussian weights distribute such that the largest weight is assigned to the original token, potentially increasing accuracy but also raising the reconstruction risk. 
Conversely, with an even-numbered $L$, the highest gaussian weights spread across the original token and its nearest neighbor, yielding a more distributed weight allocation. 
This weight distribution suggests that tokens replaced with an even-numbered neighborhood size may be less similar to the original, potentially decreasing accuracy while simultaneously reducing reconstruction vulnerability. 
We empirically validate these observations by comparing performance-privacy metrics for even and odd neighborhood sizes across different $\eta$ values, ensuring matched reconstruction rates, using \qwen{} on the SWAG dataset with $\sigma=0.75$ in \autoref{fig:windowsize-ablation}.

The results show that even-numbered $L$ exhibits limited accuracy, a characteristic persisting even with negligible vector noise (increasing $\eta$), as the perturbed token is not guaranteed to match the original one. 
Despite this accuracy constraint, up to its performance ceiling, even-numbered $L$ demonstrates superior performance compared to odd-numbered $L$, delivering higher accuracy while simultaneously maintaining a lower reconstruction rate.
Nevertheless, the odd-numbered $L$ is preferable in scenarios prioritizing high utility.

Altering the window size for odd-numbered $L$ marginally impacts utility and privacy, whereas even-numbered $L$ shows a slight preference for lower window sizes. 

\paragraph{}
Overall, the results of varying $\sigma$ and $L$ demonstrate that careful parameter tuning can lead to high utility while also optimizing the trade-off between utility and privacy. 
More importantly, these findings highlight that an improved balance between utility and privacy can be achieved, particularly when leveraging the contextual component of \econvdef{}.

\begin{figure}[!t]
    \centering
     \includegraphics[width=0.45\textwidth]{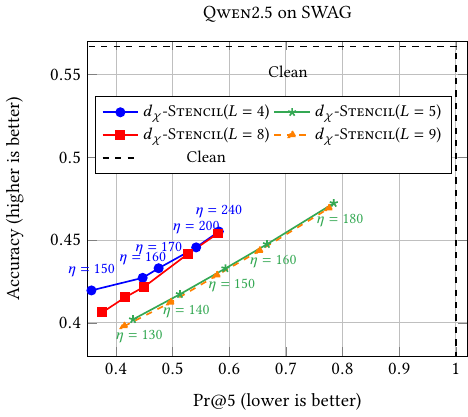}
    \caption{Accuracy-privacy utility comparison of odd and even neighbor counts $L$ for varying $\eta$ values, with fixed $\sigma=0.75$ using
    \qwen{} performance on the SWAG dataset. 
    The Clean trend represents the unsanitized baseline.}
    \label{fig:windowsize-ablation}
\end{figure}

\section{Conclusion}
\label{sec:conclusion}

In this paper, we introduced \econvdef{}, a novel \dchi{} privacy-preserving technique that utilizes both semantic and contextual information in order to maximize utility while safeguarding individual privacy. 
In order to evaluate the utility-privacy tradeoff caused by this mechanism, we evaluated it comparing to three other privacy preserving mechanisms: \convdef{}, \noise{} and \custext{}.
Because LLMs are widespread mainly in the form of chatbots, we conducted these experiments over standard benchmarks such as SST2 and QNLI, and SWAG and MMLU.
Our results demonstrate that incorporating both contextual and semantic information can provide a better utility-privacy trade-off compared to methods that rely solely on semantic information.
In addition, the results indicate that more sophisticated methods incorporating contextualized information can yield even better results.



\section*{Acknowledgments}
This research was funded by the Israeli Ministry of Science and Technology (Grant 22/5451). 
Re'em Harel was supported by the Lynn and William Frankel Center for Computer Science.

\bibliography{anthology,nlpprivacy}

\begin{thebibliography}{36}
\expandafter\ifx\csname natexlab\endcsname\relax\def\natexlab#1{#1}\fi

\bibitem[{Arachchige et~al.(2019)Arachchige, Bertok, Khalil, Liu, Camtepe, and Atiquzzaman}]{arachchige2019local}
Pathum Chamikara~Mahawaga Arachchige, Peter Bertok, Ibrahim Khalil, Dongxi Liu, Seyit Camtepe, and Mohammed Atiquzzaman. 2019.
\newblock Local differential privacy for deep learning.
\newblock \emph{IEEE Internet of Things Journal}, 7(7):5827--5842.

\bibitem[{Barrett(2019)}]{barrett2019eu}
Catherine Barrett. 2019.
\newblock Are the eu gdpr and the california ccpa becoming the de facto global standards for data privacy and protection?
\newblock \emph{Scitech Lawyer}, 15(3):24--29.

\bibitem[{Carlini et~al.(2021)Carlini, Tramer, Wallace, Jagielski, Herbert-Voss, Lee, Roberts, Brown, Song, Erlingsson et~al.}]{carlini2021extracting}
Nicholas Carlini, Florian Tramer, Eric Wallace, Matthew Jagielski, Ariel Herbert-Voss, Katherine Lee, Adam Roberts, Tom Brown, Dawn Song, Ulfar Erlingsson, et~al. 2021.
\newblock Extracting training data from large language models.
\newblock In \emph{30th USENIX Security Symposium (USENIX Security 21)}, pages 2633--2650.

\bibitem[{Chen et~al.(2023)Chen, Mo, Wang, Chen, Nie, Wang, and Cui}]{chen-etal-2023-customized}
Sai Chen, Fengran Mo, Yanhao Wang, Cen Chen, Jian-Yun Nie, Chengyu Wang, and Jamie Cui. 2023.
\newblock \href {https://doi.org/10.18653/v1/2023.findings-acl.355} {A customized text sanitization mechanism with differential privacy}.
\newblock In \emph{Findings of the Association for Computational Linguistics: ACL 2023}, pages 5747--5758, Toronto, Canada. Association for Computational Linguistics.

\bibitem[{Chung et~al.(2022)Chung, Hou, Longpre, Zoph, Tay, Fedus, Li, Wang, Dehghani, Brahma, Webson, Gu, Dai, Suzgun, Chen, Chowdhery, Narang, Mishra, Yu, Zhao, Huang, Dai, Yu, Petrov, Chi, Dean, Devlin, Roberts, Zhou, Le, and Wei}]{t5flan}
Hyung~Won Chung, Le~Hou, Shayne Longpre, Barret Zoph, Yi~Tay, William Fedus, Eric Li, Xuezhi Wang, Mostafa Dehghani, Siddhartha Brahma, Albert Webson, Shixiang~Shane Gu, Zhuyun Dai, Mirac Suzgun, Xinyun Chen, Aakanksha Chowdhery, Sharan Narang, Gaurav Mishra, Adams Yu, Vincent Zhao, Yanping Huang, Andrew Dai, Hongkun Yu, Slav Petrov, Ed~H. Chi, Jeff Dean, Jacob Devlin, Adam Roberts, Denny Zhou, Quoc~V. Le, and Jason Wei. 2022.
\newblock \href {https://doi.org/10.48550/ARXIV.2210.11416} {Scaling instruction-finetuned language models}.

\bibitem[{Cohn et~al.(2019)Cohn, Laish, Beryozkin, Li, Shafran, Szpektor, Hartman, Hassidim, and Matias}]{cohn-etal-2019-audio}
Ido Cohn, Itay Laish, Genady Beryozkin, Gang Li, Izhak Shafran, Idan Szpektor, Tzvika Hartman, Avinatan Hassidim, and Yossi Matias. 2019.
\newblock \href {https://doi.org/10.18653/v1/N19-2025} {Audio de-identification - a new entity recognition task}.
\newblock In \emph{Proceedings of the 2019 Conference of the North {A}merican Chapter of the Association for Computational Linguistics: Human Language Technologies, Volume 2 (Industry Papers)}, pages 197--204, Minneapolis, Minnesota. Association for Computational Linguistics.

\bibitem[{Cummings and Desai(2018)}]{cummings2018role}
Rachel Cummings and Deven Desai. 2018.
\newblock The role of differential privacy in gdpr compliance.
\newblock In \emph{FAT’18: Proceedings of the Conference on Fairness, Accountability, and Transparency}, volume~20.

\bibitem[{Devlin et~al.(2019)Devlin, Chang, Lee, and Toutanova}]{devlin-etal-2019-bert}
Jacob Devlin, Ming-Wei Chang, Kenton Lee, and Kristina Toutanova. 2019.
\newblock \href {https://doi.org/10.18653/v1/N19-1423} {{BERT}: Pre-training of deep bidirectional transformers for language understanding}.
\newblock In \emph{Proceedings of the 2019 Conference of the North {A}merican Chapter of the Association for Computational Linguistics: Human Language Technologies, Volume 1 (Long and Short Papers)}, pages 4171--4186, Minneapolis, Minnesota. Association for Computational Linguistics.

\bibitem[{Dwork(2006)}]{dwork2006differential}
Cynthia Dwork. 2006.
\newblock Differential privacy.
\newblock In \emph{International colloquium on automata, languages, and programming}, pages 1--12. Springer.

\bibitem[{Fernandes et~al.(2019)Fernandes, Dras, and McIver}]{fernandes2019generalised}
Natasha Fernandes, Mark Dras, and Annabelle McIver. 2019.
\newblock Generalised differential privacy for text document processing.
\newblock In \emph{Principles of Security and Trust: 8th International Conference, POST 2019, Part of ETAPS 2019, Proceedings 8}, pages 123--148.

\bibitem[{Feyisetan et~al.(2020)Feyisetan, Balle, Drake, and Diethe}]{feyisetan2020privacy}
Oluwaseyi Feyisetan, Borja Balle, Thomas Drake, and Tom Diethe. 2020.
\newblock Privacy-and utility-preserving textual analysis via calibrated multivariate perturbations.
\newblock In \emph{Proceedings of the 13th international conference on web search and data mining}, pages 178--186.

\bibitem[{Friedrich et~al.(2019)Friedrich, K{\"o}hn, Wiedemann, and Biemann}]{friedrich-etal-2019-adversarial}
Max Friedrich, Arne K{\"o}hn, Gregor Wiedemann, and Chris Biemann. 2019.
\newblock \href {https://doi.org/10.18653/v1/P19-1584} {Adversarial learning of privacy-preserving text representations for de-identification of medical records}.
\newblock In \emph{Proceedings of the 57th Annual Meeting of the Association for Computational Linguistics}, pages 5829--5839, Florence, Italy. Association for Computational Linguistics.

\bibitem[{Gao et~al.(2023)Gao, Tow, Abbasi, Biderman, Black, DiPofi, Foster, Golding, Hsu, Le~Noac'h, Li, McDonell, Muennighoff, Ociepa, Phang, Reynolds, Schoelkopf, Skowron, Sutawika, Tang, Thite, Wang, Wang, and Zou}]{eval-harness}
Leo Gao, Jonathan Tow, Baber Abbasi, Stella Biderman, Sid Black, Anthony DiPofi, Charles Foster, Laurence Golding, Jeffrey Hsu, Alain Le~Noac'h, Haonan Li, Kyle McDonell, Niklas Muennighoff, Chris Ociepa, Jason Phang, Laria Reynolds, Hailey Schoelkopf, Aviya Skowron, Lintang Sutawika, Eric Tang, Anish Thite, Ben Wang, Kevin Wang, and Andy Zou. 2023.
\newblock \href {https://doi.org/10.5281/zenodo.10256836} {A framework for few-shot language model evaluation}.

\bibitem[{Harel et~al.(2024)Harel, Elboher, and Pinter}]{harel-etal-2024-protecting}
Re{'}em Harel, Yair Elboher, and Yuval Pinter. 2024.
\newblock \href {https://aclanthology.org/2024.privatenlp-1.4/} {Protecting privacy in classifiers by token manipulation}.
\newblock In \emph{Proceedings of the Fifth Workshop on Privacy in Natural Language Processing}, pages 29--38, Bangkok, Thailand. Association for Computational Linguistics.

\bibitem[{Hendrycks et~al.(2020)Hendrycks, Burns, Basart, Zou, Mazeika, Song, and Steinhardt}]{mmlu}
Dan Hendrycks, Collin Burns, Steven Basart, Andy Zou, Mantas Mazeika, Dawn Song, and Jacob Steinhardt. 2020.
\newblock Measuring massive multitask language understanding.
\newblock \emph{arXiv preprint arXiv:2009.03300}.

\bibitem[{Li et~al.(2017)Li, Schulze, and Saake}]{li2017reverse}
Yang Li, Sandro Schulze, and Gunter Saake. 2017.
\newblock Reverse engineering variability from natural language documents: A systematic literature review.
\newblock In \emph{Proceedings of the 21st International Systems and Software Product Line Conference-Volume A}, pages 133--142.

\bibitem[{Liu et~al.(2017)Liu, Tang, Wang, and Chen}]{liu2017identification}
Zengjian Liu, Buzhou Tang, Xiaolong Wang, and Qingcai Chen. 2017.
\newblock De-identification of clinical notes via recurrent neural network and conditional random field.
\newblock \emph{Journal of biomedical informatics}, 75:S34--S42.

\bibitem[{Lyu et~al.(2020)Lyu, He, and Li}]{lyu-etal-2020-differentially}
Lingjuan Lyu, Xuanli He, and Yitong Li. 2020.
\newblock \href {https://doi.org/10.18653/v1/2020.findings-emnlp.213} {Differentially private representation for {NLP}: Formal guarantee and an empirical study on privacy and fairness}.
\newblock In \emph{Findings of the Association for Computational Linguistics: EMNLP 2020}, pages 2355--2365, Online. Association for Computational Linguistics.

\bibitem[{McSherry and Talwar(2007)}]{mcsherry2007mechanism}
Frank McSherry and Kunal Talwar. 2007.
\newblock Mechanism design via differential privacy.
\newblock In \emph{48th Annual IEEE Symposium on Foundations of Computer Science (FOCS'07)}, pages 94--103. IEEE.

\bibitem[{Mosallanezhad et~al.(2019)Mosallanezhad, Beigi, and Liu}]{mosallanezhad-etal-2019-deep}
Ahmadreza Mosallanezhad, Ghazaleh Beigi, and Huan Liu. 2019.
\newblock \href {https://doi.org/10.18653/v1/D19-1240} {Deep reinforcement learning-based text anonymization against private-attribute inference}.
\newblock In \emph{Proceedings of the 2019 Conference on Empirical Methods in Natural Language Processing and the 9th International Joint Conference on Natural Language Processing (EMNLP-IJCNLP)}, pages 2360--2369, Hong Kong, China. Association for Computational Linguistics.

\bibitem[{OpenAI(2021)}]{chatgpt2}
OpenAI. 2021.
\newblock \href {[https://chat.openai.com/]} {Chatgpt}.
\newblock OpenAI Website.
\newblock Accessed on 2023.

\bibitem[{Ouyang et~al.(2022)Ouyang, Wu, Jiang, Almeida, Wainwright, Mishkin, Zhang, Agarwal, Slama, Ray et~al.}]{chatgpt1}
Long Ouyang, Jeffrey Wu, Xu~Jiang, Diogo Almeida, Carroll Wainwright, Pamela Mishkin, Chong Zhang, Sandhini Agarwal, Katarina Slama, Alex Ray, et~al. 2022.
\newblock Training language models to follow instructions with human feedback.
\newblock \emph{Advances in Neural Information Processing Systems}, 35:27730--27744.

\bibitem[{Pennington et~al.(2014)Pennington, Socher, and Manning}]{pennington-etal-2014-glove}
Jeffrey Pennington, Richard Socher, and Christopher Manning. 2014.
\newblock \href {https://doi.org/10.3115/v1/D14-1162} {{G}lo{V}e: Global vectors for word representation}.
\newblock In \emph{Proceedings of the 2014 Conference on Empirical Methods in Natural Language Processing ({EMNLP})}, pages 1532--1543, Doha, Qatar. Association for Computational Linguistics.

\bibitem[{Poostchi et~al.(2018)Poostchi, Zare~Borzeshi, and Piccardi}]{poostchi-etal-2018-bilstm}
Hanieh Poostchi, Ehsan Zare~Borzeshi, and Massimo Piccardi. 2018.
\newblock \href {https://aclanthology.org/L18-1701/} {{B}i{LSTM}-{CRF} for {P}ersian named-entity recognition {A}rman{P}erso{NERC}orpus: the first entity-annotated {P}ersian dataset}.
\newblock In \emph{Proceedings of the Eleventh International Conference on Language Resources and Evaluation ({LREC} 2018)}, Miyazaki, Japan. European Language Resources Association (ELRA).

\bibitem[{Qu et~al.(2021)Qu, Kong, Yang, Zhang, Bendersky, and Najork}]{qu2021natural}
Chen Qu, Weize Kong, Liu Yang, Mingyang Zhang, Michael Bendersky, and Marc Najork. 2021.
\newblock Natural language understanding with privacy-preserving bert.
\newblock In \emph{Proceedings of the 30th ACM International Conference on Information \& Knowledge Management}, pages 1488--1497.

\bibitem[{Socher et~al.(2013)Socher, Perelygin, Wu, Chuang, Manning, Ng, and Potts}]{sst2}
Richard Socher, Alex Perelygin, Jean Wu, Jason Chuang, Christopher~D Manning, Andrew~Y Ng, and Christopher Potts. 2013.
\newblock Recursive deep models for semantic compositionality over a sentiment treebank.
\newblock In \emph{Proceedings of the 2013 conference on empirical methods in natural language processing}, pages 1631--1642.

\bibitem[{Sousa and Kern(2023)}]{surveyppNLP}
Samuel Sousa and Roman Kern. 2023.
\newblock How to keep text private? a systematic review of deep learning methods for privacy-preserving natural language processing.
\newblock \emph{Artificial Intelligence Review}, 56(2):1427--1492.

\bibitem[{Voigt and Von~dem Bussche(2017)}]{voigt2017eu}
Paul Voigt and Axel Von~dem Bussche. 2017.
\newblock The eu general data protection regulation (gdpr).
\newblock \emph{A Practical Guide, 1st Ed., Cham: Springer International Publishing}, 10(3152676):10--5555.

\bibitem[{Wang et~al.(2018)Wang, Singh, Michael, Hill, Levy, and Bowman}]{wang-etal-2018-glue}
Alex Wang, Amanpreet Singh, Julian Michael, Felix Hill, Omer Levy, and Samuel Bowman. 2018.
\newblock \href {https://doi.org/10.18653/v1/W18-5446} {{GLUE}: A multi-task benchmark and analysis platform for natural language understanding}.
\newblock In \emph{Proceedings of the 2018 {EMNLP} Workshop {B}lackbox{NLP}: Analyzing and Interpreting Neural Networks for {NLP}}, pages 353--355, Brussels, Belgium. Association for Computational Linguistics.

\bibitem[{Yang et~al.(2024)Yang, Yang, Hui, Zheng, Yu, Zhou, Li, Li, Liu, Huang et~al.}]{qwen2}
An~Yang, Baosong Yang, Binyuan Hui, Bo~Zheng, Bowen Yu, Chang Zhou, Chengpeng Li, Chengyuan Li, Dayiheng Liu, Fei Huang, et~al. 2024.
\newblock Qwen2 technical report.
\newblock \emph{arXiv preprint arXiv:2407.10671}.

\bibitem[{Yiwen et~al.(2018)Yiwen, Yang, Huang, Xie, Zhao, and Wang}]{yiwen2018utility}
NIE Yiwen, Wei Yang, Liusheng Huang, Xike Xie, Zhenhua Zhao, and Shaowei Wang. 2018.
\newblock A utility-optimized framework for personalized private histogram estimation.
\newblock \emph{IEEE Transactions on Knowledge and Data Engineering}, 31(4):655--669.

\bibitem[{Yue et~al.(2021)Yue, Du, Wang, Li, Sun, and Chow}]{yue-etal-2021-differential}
Xiang Yue, Minxin Du, Tianhao Wang, Yaliang Li, Huan Sun, and Sherman S.~M. Chow. 2021.
\newblock \href {https://doi.org/10.18653/v1/2021.findings-acl.337} {Differential privacy for text analytics via natural text sanitization}.
\newblock In \emph{Findings of the Association for Computational Linguistics: ACL-IJCNLP 2021}, pages 3853--3866, Online. Association for Computational Linguistics.

\bibitem[{Zellers et~al.(2018{\natexlab{a}})Zellers, Bisk, Schwartz, and Choi}]{zellers2018swag}
Rowan Zellers, Yonatan Bisk, Roy Schwartz, and Yejin Choi. 2018{\natexlab{a}}.
\newblock Swag: A large-scale adversarial dataset for grounded commonsense inference.
\newblock \emph{arXiv preprint arXiv:1808.05326}.

\bibitem[{Zellers et~al.(2018{\natexlab{b}})Zellers, Bisk, Schwartz, and Choi}]{zellers-etal-2018-swag}
Rowan Zellers, Yonatan Bisk, Roy Schwartz, and Yejin Choi. 2018{\natexlab{b}}.
\newblock \href {https://doi.org/10.18653/v1/D18-1009} {{SWAG}: A large-scale adversarial dataset for grounded commonsense inference}.
\newblock In \emph{Proceedings of the 2018 Conference on Empirical Methods in Natural Language Processing}, pages 93--104, Brussels, Belgium. Association for Computational Linguistics.

\bibitem[{Zhou et~al.(2022)Zhou, Lu, Gui, Ma, Fei, Wang, Ding, Cheung, Zhang, and Huang}]{zhou-etal-2022-textfusion}
Xin Zhou, Jinzhu Lu, Tao Gui, Ruotian Ma, Zichu Fei, Yuran Wang, Yong Ding, Yibo Cheung, Qi~Zhang, and Xuanjing Huang. 2022.
\newblock \href {https://doi.org/10.18653/v1/2022.emnlp-main.572} {{T}ext{F}usion: Privacy-preserving pre-trained model inference via token fusion}.
\newblock In \emph{Proceedings of the 2022 Conference on Empirical Methods in Natural Language Processing}, pages 8360--8371, Abu Dhabi, United Arab Emirates. Association for Computational Linguistics.

\bibitem[{Zhou et~al.(2023)Zhou, Lu, Ma, Gui, Wang, Ding, Zhang, Zhang, and Huang}]{zhou-etal-2023-textobfuscator}
Xin Zhou, Yi~Lu, Ruotian Ma, Tao Gui, Yuran Wang, Yong Ding, Yibo Zhang, Qi~Zhang, and Xuanjing Huang. 2023.
\newblock \href {https://doi.org/10.18653/v1/2023.findings-acl.337} {{T}ext{O}bfuscator: Making pre-trained language model a privacy protector via obfuscating word representations}.
\newblock In \emph{Findings of the Association for Computational Linguistics: ACL 2023}, pages 5459--5473, Toronto, Canada. Association for Computational Linguistics.

\end{thebibliography}
\bibliographystyle{acl_natbib}

\newpage




\end{document}